\newcommand{\revise}{\ast}
\newtheorem{definition}{Definition}[section]
\newtheorem{theorem}{Theorem}[section]
\newtheorem{example}{Example}[section]
\newcommand{\la}{\leftarrow}
\newcommand{\nafo}[0]{\mathit{not}}
\newcommand{\naf}[0]{\nafo\;}
\newcommand{\head}[1]{\mathit{head}(#1)}
\newcommand{\pbody}[1]{\mathit{body}^{+}(#1)}
\newcommand{\nbody}[1]{\mathit{body}^{-}(#1)}
\newcommand{\body}[1]{\mathit{body}(#1)}
\newcommand{\atom}[1]{\ensuremath{\mathit{atom}(#1)}}
\newcommand{\Lit}{\ensuremath{\mathcal{L}}} 
\newcommand{\Rem}[2]{\ensuremath{#1\!\downarrow\!#2}}
\newcommand{\cmin}{|\!\ominus\!|^\mathit{min}}
\newcommand{\AS}[1]{\ensuremath{\mathit{AS}(#1)}}
\newcommand{\ThreeAS}[1]{\ensuremath{\mathit{3AS}(#1)}}
\newcommand{\Mod}[1]{\ensuremath{\mathit{Mod}(#1)}}
\newcommand{\CnO}[0]{\ensuremath{\mathit{Cn}}}
\newcommand{\Cn}[1]{\ensuremath{\CnO(#1)}}
\newcommand{\reduct}[2]{\ensuremath{#1^{#2}}}
\title[Logic Program Revision]
{A Program-Level Approach to Revising Logic Programs under the Answer Set
Semantics}
\author[James P.\ Delgrande]
  {%
  JAMES P.\ DELGRANDE\\
  School of Computing Science\\
  Simon Fraser University\\
  Burnaby, B.C.\\
  Canada  V5A 1S6\\
  \email{jim@cs.sfu.ca}
}
\begin{document}

\maketitle

\begin{abstract}
An approach to the \emph{revision} of logic programs under the answer set
semantics is presented.
For programs $P$ and $Q$, the goal is to determine the answer sets that
correspond to the revision of $P$ by $Q$, denoted $P * Q$.
A fundamental principle of classical (AGM) revision, and the one that guides
the approach here, is the \emph{success postulate}.
In AGM revision, this stipulates that $\alpha \in K \revise \alpha$.
By analogy with the success postulate, for programs $P$ and $Q$, this means
that the answer sets of $Q$ will in some sense be contained in those of
$P * Q$.
The essential idea is that for $P * Q$, a three-valued answer set for $Q$,
consisting of positive and negative literals, is first determined.
The positive literals constitute a regular answer set, while the negated
literals make up a minimal set of naf literals required to produce the answer
set from $Q$.
These literals are propagated to the program $P$, along with those rules of
$Q$ that are not \emph{decided} by these literals.
The approach differs from work in \emph{update logic programs} in two main
respects.
First, we ensure that the revising logic program has higher priority, and so
we satisfy the success postulate;
second, for the preference implicit in a revision $P * Q$, the program $Q$
as a whole takes precedence over $P$, unlike update logic programs, since
answer sets of $Q$ are propagated to $P$.
We show that a core group of the AGM postulates are satisfied, as are the
postulates that have been proposed for update logic programs.
%Complexity results are discussed, as is an implementation of the approach.
\end{abstract}

\begin{keywords}
Answer set programming, belief change, belief revision
\end{keywords}

\section{Introduction}
\label{sec:introduction}

\emph{Answer set programming} (ASP) \cite{baral02a} has proven to be
well-suited to problems in knowledge representation and reasoning (KR).
The advent of efficient provers \cite{SimonsEtAl02,LeoneEtAl06,gekanesc07a} has
led to the successful application of ASP in both KR and constraint
satisfaction problems.
However, an important consideration is that in any nontrivial domain, an
agent's knowledge of the domain will almost certainly be incomplete or
inaccurate, or it may become out of date as the domain evolves.
Thus, over time an agent will need to adjust its knowledge after receiving
new information concerning the domain.

In ASP there has been a substantial effort in developing approaches to
updating a knowledge base, where a knowledge base is expressed as a logic
program under the answer set semantics.
In general, one is given a sequence of logic programs $(P_1, \dots, P_n)$
where informally rules in $P_i$ in some fashion or other take precedence over
rules in $P_j$ for $j < i$.
However, it isn't clear that such approaches capture a notion of
\emph{revision} or \emph{update} of logic programs, so
much as they capture a notion of \emph{priority} or \emph{preference} between
rules in a program.
Thus such approaches generally fail to satisfy properties that would be
expected to hold for revision in classical logic.  
Part of the reason is that revision appears to be intrinsically more
difficult in a nonmonotonic setting (such as in ASP) than in a monotonic one,
such as in classical logic.
We also suggest that part of the problem is that extant approaches
enforce a notion of priority at the level of the individual rule;
instead we propose that the notion of priority given in a revision is a
\emph{program level} notion, in that for a revision specified as $P_1 * P_2$,
program $P_2$ taken as a whole has priority over $P_1$.
%We accomplish this by ensuring that an answer set of $P_1 * P_2$ is an
%extension of an answer set of $P_2$.

In this paper, an approach to the \emph{revision} of logic programs is
presented.
A major goal is to investigate the extent to which AGM-style revision is
compatible with an extended logic programming framework.
A logic program is regarded as a representation of an agent's epistemic state,
while the corresponding answer sets are taken as representing the agent's
beliefs.
The approach describes revision, in that the postulate of \emph{success} is
adhered to;
the idea is that for a revision of $P_1$ by $P_2$, beliefs (viz.\ elements
of an answer set) given by $P_2$ should overrule those in $P_1$.
This is carried out by first determining 3-valued answer sets of $P_2$.
Each such answer set is a pair $(X^+,X^-)$, where $X^+$ is a regular answer
set of $P_2$, and $X^-$ is a minimum set of negation as failure literals
necessary to produce the answer set $X^+$.
The information in each such 3-valued answer set, together with the rules of
$P_2$ not used in the definition of the answer set, and along with the
program $P_1$, is used to define an answer set (or answer sets) of $P_1 * P_2$.

The assumption of \emph{success} leads to an approach with a different
emphasis from previous approaches.
In particular, for the revision of $P_1$ by $P_2$, the \emph{program} $P_2$
is treated as having higher priority than the program $P_1$;
this is in contrast with previous work, wherein the \emph{rules} in $P_2$
are treated as having higher priority than (some or all of) the rules in
$P_1$.
We suggest that this distinction separates approaches addressing
\emph{priorities} in logic programs from \emph{revision}.
As well, it leads to an approach with better properties than earlier work.
For example, the approach is syntax independent, in that if two programs are
strongly equivalent, then they behave the same with respect to revision.
%The approach also has good complexity properties, and
As well, a prototype has been implemented.

We argue that this approach is an appropriate interpretation for a notion of
\emph{revision} in logic programs.
Furthermore, the approach may be applied in cases where a problem is expressed
as a sequence of NP-complete problems;
for example, it allows the natural specification of a problem in which a
3-colouring of a graph is to be found, followed by a Hamiltonian path among
the yellow vertices.
We discuss these issues in more detail in the next section, after formal
preliminaries have been presented.
After this, intuitions are given, and the following section presents the
formal details and properties of the approach.
We conclude with a discussion that includes the applicability of the approach
and future work.
Proofs are for the most part straightforward, and are abbreviated due to
space constraints.

\section{Background}
\label{sec:background}

\subsection{Formal Preliminaries}

Our language is built from a finite set of atoms
${\cal P} = \{a, b, \dots \}$.
A literal is an atom $a$ or its negation $\neg a$;
\Lit\ is the set of literals.
For a set $X$ of literals, $not(X) = \{ not \, a \mid a \in X\}$.
For a literal $l$, $\naf l$ is sometimes referred to as a naf (negation as
failure) literal or \emph{weakly negated} literal.
For $l \in \Lit$, $atom(l)$ is the atom corresponding to $l$;
for a set $X$ of literals, $atom(X) = \{ atom(l) \mid l \in X\}$.
A rule $r$ is of the form
\begin{equation}
\label{eqn:rule}
L_0 \leftarrow L_1, \dots, L_n, \naf L_{n+1}, \dots, \naf L_m
\end{equation}
where $L_0$, \dots, $L_m$ are literals and $0 \leq n \leq m$.
If $n=m$ then $r$ is \emph{positive}.
If $m=0$, then $r$ is called a \emph{fact}.
We also allow the situation where $L_0$ is absent, in which case 
we denote the head by $\bot$;
and $r$ is called a \emph{constraint}.
The literal $L_0$ is called the \emph{head} of $r$, and the set
\(
\{L_1,\dots,L_n,$ $\naf L_{n+1},\dots,\naf L_m\}
\)
is the \emph{body} of $r$.
We use $\head{r}$ to denote the head of rule $r$, and $\body{r}$ to
denote the body of~$r$.
Furthermore,
\(
\pbody{r}
=
\{L_1,\dots, L_n\}
\)
and
\(
\nbody{r}
=
\{L_{n+1},\dots, L_m\}.
\)
An (\emph{extended}\/) \emph{logic program}, or simply a \emph{program},
is a finite set of rules.

A set of literals $X$ is \emph{consistent} if it does not contain a
complementary pair $a$, $\neg a$ of literals and does not contain $\bot$.
We say that $X$ is \emph{logically closed} iff it is either consistent or
equals $\mathcal{L}$.
The smallest set of literals being both logically closed and closed under a
set $P$ of positive rules is denoted by $\Cn{P}$.
The \emph{reduct}, $\reduct{P}{X}$, of $P$ relative to a set $X$ of
literals is
defined by
\(
\reduct{P}{X}
= \{\head{r}\leftarrow\pbody{r}\mid r\in P,\ \nbody{r}\cap X=\emptyset\}
\)
\cite{GelfondLifschitz90}.
A set $X$ of literals is an \emph{answer set} of a logic program $P$ if
$\Cn{\reduct{P}{X}}=X$.
The set of answer sets of program $P$ is denoted by $\AS{P}$.
A program $P$ is consistent just if it has an answer set not equal to
${\cal L}$.
Thus a program with no answer sets is also counted as inconsistent.
For example, the program
\(
P = \{
a\la,\quad
b \la a,\naf{c},\quad
c\la \naf{b}
\}
\)
has answer sets $\AS{P}=\{\{a,b\},\{a,c\}\}$.

Two programs $P_1$ and $P_2$ are \emph{equivalent}, written
\(
P_1 \equiv P_2
\),
if both programs have the same answer sets.
Two programs are strongly equivalent \cite{LifschitzPearceValverde01},
written $P_1 \equiv_s P_2$, just if 
\(
P_1 \cup P_3 \equiv P_2 \cup P_3
\)
for every logic program $P_3$.
%Two programs are uniform equivalent \cite{EiterFink03},
%written $P_1 \equiv_u P_2$, just if
%\(
%P_1 \cup F \equiv P_2 \cup F
%\)
%for every set of facts $F$.

\subsection{Belief revision}
\label{sec:revision}

\emph{Belief revision} is the area of KR that is concerned with how an agent
may incorporate new information about a domain into its knowledge base.
In belief revision, a formula $\alpha$ is to be incorporated into the agent's
set of beliefs $K$, so that the resulting knowledge base is consistent when
$\alpha$ is.  Since $\alpha$ may be inconsistent with $K$, revision may also
necessitate the removal of beliefs from $K$ in order to retain consistency.
By a principle of \emph{informational economy}, as many beliefs as
possible are retained from $K$.
A common approach in addressing belief revision is to provide a set of
\emph{rationality postulates} for belief change functions.
The \emph{AGM approach} \cite{agm85,Gardenfors88} provides the best-known set
of such postulates.
An agent's beliefs are modelled by a set of sentences, called a
\emph{belief set}, closed under the logical consequence operator of a logic
that includes classical propositional logic.

Subsequently, various researchers have argued that it is more appropriate to
consider \emph{epistemic states} as objects of revision.
An epistemic state $K$ effectively includes information regarding how the
revision function itself changes following a revision.
The belief set corresponding to epistemic state $K$ is denoted $Bel(K)$. 
Formally, a revision operator $*$ maps an epistemic state $K$ and new
information $\alpha$ to a revised epistemic state $K *\alpha$. 
For set of formulas $\Psi$, define $\Psi + \alpha$ as $\Cn{\Psi\cup\{\alpha\}}$.
Then, the basic AGM postulates for revision can be given as follows:
\begin{description}
\item[($K*1$)]
$Bel(K*{\alpha}) = \Cn{Bel(K*{\alpha})}$

\item[($K*2$)]
$\alpha \in Bel(K*{\alpha})$

\item[($K*3$)]
$Bel(K*{\alpha}) \subseteq Bel(K)+{\alpha}$

\item[($K*4$)]
If
\(
\neg \alpha \notin Bel(K)
\)
then
$Bel(K)+{\alpha} \subseteq Bel(K*{\alpha})$

\item[($K*5$)]
$Bel(K*{\alpha})$
is inconsistent, only if
$\vdash \neg \alpha$

\item[($K*6$)]
If
$\alpha \equiv \psi$ then
$Bel(K*{\alpha}) = Bel(K*{\psi})$
\end{description}
Thus, the result of revising $K$ by $\alpha$ is an epistemic state in which
$\alpha$ is believed in the corresponding belief set ($K*1$, $K*2$);
whenever the result is consistent, the revised belief set consists of the
expansion of $Bel(K)$ by $\alpha$ ($K*3$, $K*4$);
the only time that $Bel(K)$ is inconsistent is when $\alpha$ is inconsistent
($K*5$);
and revision is independent of the syntactic form of the formula for
revision ($K*6$).
(As well, there are two \emph{extended postulates} ($K*7$) and ($K*8$) that
extend ($K*3$) and ($K*4$) to relating revision by a conjunction and the
individual conjuncts.
Like ($K*3$) and ($K*4$) they are are not appropriate in a nonmonotonic
framework (below) and we do not consider them further.)

Belief revision is usually expressed with respect to an underlying logic
governed by a Tarskian consequence operator.
It can be observed that two of the postulates, ($K*3$) and ($K*4$), are
inappropriate in a system governed by a notion of nonmonotonic consequence. 
As an example, consider where the agent believes that a particular
individual is a bird and that it can fly.
If it is subsequently learned that the bird was a penguin, the agent would
also modify its knowledge base so that it believed that the individual did
not fly.
This example then violates both ($K*3$) and ($K*4$).
Note that we can't circumvent this counterexample by simply excluding
states of affairs where there are no flying penguins, since we would want to
allow the \emph{possibility} that a penguin (perhaps an extremely fit penguin)
flies, even though penguins, by default, do not fly.
In consequence we focus on postulates ($K*1$), ($K*2$), ($K*5$) and ($K*6$),
which we refer to as the \emph{core} AGM postulates.

Note that nonmonotonic formalisms can nonetheless be treated from the
standpoint of classical (AGM) revision;
the issue is to express revision in terms of a monotonic foundation.
Thus \cite{desctowo08} addresses revision in ASP from the standpoint of the
\emph{SE models} of a program.
This is in contrast to the work here, and previous work, which addresses
belief change at the level of a logic program rather than with respect to
the underlying models.

Various revision operators have been defined;
see \cite{Satoh88,Williams95,DelgrandeSchaub03} for representative
approaches.
Perhaps the best known approach is the \emph{Dalal revision operator}
\cite{Dalal88}.
This operator is defined as follows.
Let $\psi$, $\mu$ be formulas of propositional logic, and let $\ominus$ be
the symmetric difference of two sets.
Then the Dalal revision $\psi *_d \mu$ is characterised by those models of
$\mu$ that are closest to models of $\psi$, where the notion of closeness is
given by the Hamming distance between interpretations.
Formally, $\psi *_d \mu$, is defined as follows.
For formulas $\alpha$ and $\beta$, define:
\[
\cmin(\alpha, \beta)
\;\;\stackrel{\text{\tiny def}}{=}\;\;
\mathit{min}_\leq(
\{
|w \ominus w'| \mid
w \in \Mod{\alpha}, w' \in \Mod{\beta}
\} ).
\]
Then, \Mod{\psi *_d \mu} is given as
\[
\{
w \in \Mod{\mu}
\mid
\exists w' {\in} \Mod{\psi}
\mbox{ s.t. }
|w \ominus w'| = \cmin(\psi, \mu)
\}.
\]

\subsection{Logic Program Updates}
\label{sec:LPUpdates}

Previous work in ASP that addresses an agent's evolving knowledge base has
generally been termed \emph{logic program update} or
\emph{update logic programs}.
In such approaches one begins with an \emph{ordered logic program},
comprised of a sequence of logic programs $(P_1, \dots, P_n)$.
Rules in higher-ranked sets are, in some fashion or another, preferred over
those in lower-ranked sets.
Commonly this is implemented by using the ordering on rules to adjudicate
which of two rules should be applied when both are applicable and their
respective heads conflict; see for example
\cite{inou-saka-99,AlferesEtAl00,Eiteretal02}.
Alternatively, other approaches use the ordering to ``filter'' rules, e.g.\
\cite{DBLP:conf/ecai/ZhangF98}.
Hence, in one fashion or another, some rules are selected over others, and
these selected rules are used to determine the resulting answer sets.

A major stream of research in ASP has addressed \emph{prioritised} or
\emph{preference} logic programs, where a prioritised logic program is a
pair $(P,<)$ in which $<$ is an ordering over rules in program $P$.
The intuition is that some rules take precedence over (or override or are
more important than) other rules.
Syntactically, the \emph{form} of an update logic program, given as a total
order on programs, is of course an instance of a prioritised logic program.
We suggest that an update logic program is in fact best regarded as a
prioritised logic program.
(Indeed some approaches to updating logic programs are defined in terms of a
prioritised logic program \cite{foo-zhan-97a,DBLP:conf/ecai/ZhangF98}.)
This is most clearly seen in those approaches where the focus is on rules
whose heads conflict.
Thus, for example in the approach of \cite{Eiteretal02}, preferences come
into play only between two rules when the head of one is the complementary
literal of the other.
The following example, due to Patrick Kr\"umpelmann, is illuminating:
\[
P_1 = \{ b \la \}, \quad
P_2 = \{ \neg a \la b \}, \quad
P_3 = \{ a \la \}
\]
In update logic programs, the rule in $P_2$ is dropped, since its head
conflicts with the higher-ranked rule in $P_3$.
Yet dropping the rule in $P_1$ yields a consistent result.
Moreover,
since it is the lowest-ranked rule, arguably it \emph{should} be disregarded.

We can summarise the preceding by suggesting that previous work is essentially
based at the \emph{rule level}, in that higher-ranked rules preempt
lower-ranked rules.
In contrast, the approach here is based at the \emph{program level};
that is for a revision $P_1 * P_2$, the program $P_2$ is considered
\emph{as a whole} to have priority over $P_1$, in that program $P_2$'s
answer sets in a sense have priority over those of $P_1$.
This is effected in a revision $P_1 * P_2$ by first determining answer sets
of $P_2$, and then augmenting, as appropriate, these answer sets with
additional information via $P_1$.
(There is more to it than this, as described in the next section.
For example, rules in $P_1$ may result in other rules in $P_2$ being applied.
However the essential point remains, that the answer sets of $P_1$ are first
determined, and then subsequently augmented.)
Arguably this is the appropriate level of granularity for revision:
If an agent learns new information given in a program $P$, it is the
program \emph{as a whole} that comprises the agent's (new) knowledge.
That is, a rule $r \in P$ isn't an isolated piece of knowledge, but rather,
given possible negation as failure literals in $\body{r}$, the potential
instantiation of $r$ depends non-locally on the entire program $P$.

\cite{Eiteretal02} suggests a number of alternative postulates that may be
considered for update program updates.
For our use, they are given as follows:
\begin{tabbing}
Initialisation:\qquad\quad \= $\AS{\emptyset * P} = \AS{P}$. \kill
\\
Initialisation: \> $\AS{\emptyset * P} = \AS{P}$.
\\
Idempotency: \>
$\AS{P * P} = \AS{P}$. 
\\
Tautology: \>
If $\head{r}\in\pbody{r}$, for all $r\in P_2$,
then
\\
\>
\qquad
$\AS{P_1 * P_2} = \AS{P_1}$.
\\
Associativity: \>
$\AS{P_1 * (P_2 * P_3)} = \AS{(P_1 * P_2) * P_3}$.
\\
Absorption: \>
if $\AS{P_2} = \AS{P_3}$ then
\(
\AS{P_1 * P_2 * P_3} = \AS{P_1 * P_2}.
\)
\\
Augmentation: \>
If $\AS{P_2} \subseteq \AS{P_3}$, then
\(
\AS{P_1 * P_2 * P_3} = \AS{P_1 * P_3}.
\)
\\
Disjointness: \>
If $\atom{P_1}\cap\atom{P_2}=\emptyset$,
then
\\
\>
\qquad
\(
\AS{(P_1 \cup P_2) * P_3} = \AS{P_1 * P_3} \cup \AS{P_2 * P_3}.
\)
\\
Parallelism: \>
If $\atom{P_2} \cap \atom{P_3}=\emptyset$,
then
\\
\>
\qquad
\(
\AS{P_1 * (P_2 \cup P_3)} = \AS{P_1 * P_2} \cup \AS{P_1 * P_3}.
\)
\\
Non-Interference: \>
If $\atom{P_2} \cap \atom{P_3}=\emptyset$,
then  
\\
\>
\qquad
\(
\AS{P_1 * P_2 * P_3} = \AS{P_1 * P_3 * P_2}.
\) 
\end{tabbing}
Many of these postulates are elementary and expected, yet most extant
approaches have problems with them. 
In particular, most approaches do not satisfy \emph{tautology} and so for
instance the addition of a rule $p \la p$ may produce different results.
Moreover, those that do satisfy \emph{tautology} most often do so by
specifically addressing this principle.
It seems reasonable to suggest that the reason for this lack of adherence to
basic postulates is that belief change with respect to ASP is a program-level
operation, and not a rule-level operation.

\section{Logic Program Revision: Intuitions}
\label{sec:intuitions}

The overall goal is to come up with an approach to revision in logic
programs (call it \emph{LP revision}) under the answer set semantics, where
the approach adheres insofar as possible to intuitions underlying classical
(AGM) revision.
In essence, a major goal is to examine the extent to which the AGM approach
may be applied with respect to answer set programming.
%In AGM-style revision, a \emph{belief set} $K$ is revised by a formula
%$\alpha$ to give another belief set $K * \alpha$.
As described earlier, we take a logic program $P$ as specifying an agent's
\emph{epistemic state}.
The answer sets of $P$, $\AS{P}$, represent the beliefs of the agent, and so
are analogous to a belief set in AGM revision.

A key characteristic of AGM revision, and one that guides the approach here,
is the \emph{success postulate}.
Recall that in the AGM approach, the success postulate stipulates that
$\alpha \in Bel(K * \alpha)$, or in terms of models, that 
$\Mod{Bel(K * \alpha)} \subseteq \Mod{\alpha}$.
Informally, in a revision by $\alpha$, the logical content of $\alpha$ is
retained.
By analogy with the success postulate, for a revision of $P_1$ by $P_2$, the
content of $P_2$ is given by its answer sets, and so in the revision
$P_1 * P_2$, the answer sets of $P_2$ should in some sense be contained in
those of $P_1 * P_2$.
This notion is fundamental; as well, it has very significant ramifications
in an approach to LP revision.

For example, consider the following programs, where we want to determine
$P_1*P_2$:
\begin{example}
\label{ex:1}
\begin{eqnarray*}
P_1 & = & \{b \leftarrow, \;\; c \leftarrow \naf d \} \\
P_2 & = & \{a \leftarrow \naf b\}
\end{eqnarray*}
\end{example}
%
%\medskip
%\noindent
%{\bf Example 1}
%\begin{eqnarray*}
%P_1 & = & \{b \leftarrow, \;\; c \leftarrow \naf d \} \\
%P_2 & = & \{a \leftarrow \naf b\}
%\end{eqnarray*}
%
By our interpretation of the success postulate, since $\{a\}$ is an answer
set of $P_2$, it should appear in the answer sets of $P_1 * P_2$
(that is, $\{a\}$ should be a subset of some answer set of $P_1 * P_2$).
However, $a$ was derived by the failure of being able to prove $b$ in $P_1$.
Consequently, if the answer sets of $P_2$ are to appear among the answer sets
of $P_1 * P_2$, then the \emph{reasons} for the answer sets of $P_2$ should
also be retained.
Consequently $b$ should not appear in the answer sets of $P_1 * P_2$.
Hence we would want to obtain $\{a,c\}$ as the answer set of $P_1 * P_2$.
This example serves to distinguish the present approach from previous work,
in that in previous work the assertion of a fact overrides an assumption of
negation as failure at any level.
Thus in previous work on update logic programs for the above example one
would obtain the answer set $\{b, c\}$.

So adherence to a success postulate requires that, if a literal $\naf p$ is
used in a higher-ranked set of rules, it should override positive occurrences
in lower-ranked sets.
This also is in keeping with our assertion in the previous section, that in
a revision we consider a program as a whole, and not at the individual rule
level.
However, it might plausibly be objected that often one wants to retain facts
(such as $b$ in $P_1$), and so such facts should obtain in the revision
$P_1 * P_2$.
We suggest instead that in such an instance, such (protected) facts should
in fact be given higher priority.
We return to this point in Section~\ref{sec:discussion}, where we discuss the
notion of a \emph{revision methodology}.

In working towards an answer set for a revision $P_1 * P_2$, we first
determine an answer set for $P_2$.
However, we need to keep track of not just those literals that are (positively)
derivable, but also a set of $not$ literals necessary for the construction of
the answer set.
Consequently, we deal with three-valued answer sets.
Thus for Example~\ref{ex:1}, in considering $P_1$, we need to keep track of the fact
that $a$ was derived in $P_2$ and that moreover $\naf b$ was used in this
derivation, thereby necessitating the blocking of any later deriving of $b$
in lower ranked rule sets.
We write the three valued answer set of $P_2$ in Example~\ref{ex:1} as
$(\{a\}, \{b\})$.
The three value answer set for $P_1 * P_2$ then is $(\{a,c\}, \{b,d\})$;
and the corresponding answer set for $P_1 * P_2$ is $\{a,c\}$.

Consider next a variation of Example~\ref{ex:1} where again we are to determine
answer sets for $P_1 * P_2$:
\begin{example}
\label{ex:1b}
\begin{eqnarray*}
P_1 & = & \{b \leftarrow, \;\; c \leftarrow \naf d \} \\
P_2 & = & \{a \leftarrow \naf b, \;\; a \leftarrow \naf e\}
\end{eqnarray*}
\end{example}
%\medskip
%\noindent
%{\bf Example 2}
%\begin{eqnarray*}
%P_1 & = & \{b \leftarrow, \;\; c \leftarrow  \} \\
%P_2 & = & \{a \leftarrow \naf b, \;\; a \leftarrow \naf c\}
%\end{eqnarray*}
%
The atom $a$ may be obtained by $\naf b$ or $\naf c$ in $P_2$.
By appeal to a principle of \emph{informational economy}
(Section~\ref{sec:background}), in a three valued answer set we retain a
minimum number of $not$ literals sufficient to derive the answer set.
This commitment to a minimum number of $not$ literals in turn means that
there are fewer restrictions when next considering $P_1$.
In the present example, this means that $P_2$ has two three-valued answer
sets:
$(\{a\}, \{b\})$ and $(\{a\}, \{c\})$.
This leads to the three valued answer sets for $P_1 * P_2$:
$(\{a,c\}, \{b\})$ and $(\{a,b\}, \{c\})$; and corresponding answer sets
$\{a,c\}$ and $\{a,b\}$.

Consider finally the programs:
\begin{example}
\label{ex:2}
\begin{eqnarray*}
P_1 & = & \{b \leftarrow \} \\
P_2 & = & \{a \leftarrow b\}
\end{eqnarray*}
\end{example}
%\medskip
%\noindent
%{\bf Example 3}
%\begin{eqnarray*}
%P_1 & = & \{b \leftarrow \} \\
%P_2 & = & \{a \leftarrow b\}
%\end{eqnarray*}
%
$P_2$ has answer set $\emptyset$ (and three-value answer set
$(\emptyset,\emptyset)$).
However in next considering $P_1$ in the revision $P_1 * P_2$, one should be
able to use the non-satisfied rule $a \leftarrow b$ and obtain an answer set
$\{a,b\}$ for $P_1 * P_2$.
This is by way of an extended notion of informational economy, in which a
maximal justifiable set of beliefs is desirable.
So rules of $P_2$ that are neither applied nor refuted should nonetheless be
available for later steps in the revision.

These examples have dealt with a single occurrence of revision.
Clearly the process can be iterated to a sequence of programs.
Informally, an answer set for a sequence of programs is determined by
finding 3-valued answer sets for higher-ranked programs, and propagating
these answer sets, along with \emph{undecided} rules, to lower ranked
programs.
Consequently, answer sets are built incrementally, with literals at a higher
level being retained at lower levels.
In the next subsection, for generality, we work with sequences of programs
rather than just pairs.

\section{Logic Program Revision: Approach}

This section describes an approach to LP revision based on the intuitions of
the previous subsection.
Consider by way of analogy, classical AGM revision: 
For a revision $K * \alpha$, the formula $\alpha$ is to be incorporated in $K$;
since $Bel(K) \cup \{\alpha\}$ may well be inconsistent, formulas in
$Bel(K)$ may be dropped in order to obtain a consistent result.
Similarly in a revision of programs $P_1 * P_2$:
we would like the result to be consistent if possible.

In outline, the goal is to determine answer sets for $P_1 * P_2$.
To this end, an answer set $X$ of $P_2$ is determined and it, along with the
rules in $P_2$, say $P_2'$, that do not take part in the definition of $X$,
are propagated to $P_1$.
Since the result should be consistent, we consider maximal subsets of
$P_1$ that are consistent with $P_2' \cup X$ and use these to determine the
resulting answer sets for the revision.
We begin by defining the relevant notion of an answer set with respect to
revision.

By a three-valued interpretation we will mean an ordered pair of sets
$X = (X^+,X^-)$ where $X^+,X^- \subseteq \Lit$ and
$X^+ \cap X^- = \emptyset$.
The intuition is that members of $X^+$ constitute an answer set of some
program, while $X^-$ contains a minimum set of assumptions necessary for the
derivation of $X^+$.
A (canonical) program corresponding to a 3-valued interpretation
$X = (X^+,X^-)$ is given by
\[
Pgm(X) \;=\:
\{ a \la \; \mid a \in X^+ \}
\;\cup\;
\{ \bot \la a \mid a \in X^- \}.
\]
The consequence relation $Cn(.)$ on definite programs is extended to
arbitrary logic programs by the simple expedient of treating a weakly negated
literal $\naf l$ as a new atom.
Thus for example $Cn(\{ a \la,\;\; b \la a,\;\; c \la \naf d \})$ is
$\{a,b\}$.

The next definition extends the notion of reduct to 3-valued
interpretations.
\begin{definition}\label{def:min-reduct}
Let $P$ be a logic program and $X = (X^+, X^-)$ a 3-valued interpretation.

$P^X$, the \emph{min-reduct of $P$ wrt $X$}, is the program obtained from
$P$ by:
\begin{enumerate}
\item
deleting every rule
%\(
%head(r) \la body^+(r), not(body^-(r))
%\)
$r \in P$
where
$body^-(r) \cap X^+ \neq \emptyset$, and

\item
replacing any remaining rule $r \in P$
%where
%\(
%r = head(r) \la body^+(r), not(body^-(r)),
%\)
by

\qquad
\(
head(r) \la body^+(r), not(body^-(r) \setminus X^-).
\)
\end{enumerate}
\end{definition}
Part 1 above is the same as in the standard definition of reduct.
In Part 2, just those naf literals appearing in $X^-$ are deleted from the
bodies of the remaining rules.
The following definition extends the notion of an answer set to 3-valued
answer sets.
\begin{definition}\label{def:min-AS}
Let $P$ be a logic program and $X = (X^+, X^-)$ a 3-valued interpretation.

$X = (X^+, X^-)$ is a
\emph{3-valued answer set for $P$} if
$Cn({P^X}^+) = Cn(P^X) = X^+$ and for any $Y = (X^+,Y^-)$ where
%$Y^+ \supseteq X^+$,
$Y^- \subset X^-$
%and with one of the containments being proper,
%or $Y^+ \supseteq X^+$, $Y^- \subset X^-$
we have that $Cn(P^{Y}) \neq X^+$.

The set of 3-valued answer sets of program $P$ is denoted $\ThreeAS{P}$.
\end{definition}
Thus for 3-valued answer set $X = (X^+, X^-)$ of $P$, we have that $X^+$ is an
answer set of $P$.
As well, 3-valued answer sets include sufficient negation as failure literals
for the derivation of the answer set.
Thus, for $\{ a \la \naf b, \; a \la \naf c \}$ there
are two 3-valued answer sets $(\{a\},\{b\})$ and $(\{a\},\{c\})$ along with 
answer set $\{a\}$.

As suggested at the start of the section, in a revision $P_1 * P_2$ we need
to isolate a subset of $P_1$ that is consistent with $P_2$. 
We give the necessary definition next.

\begin{definition}
\label{def:remainder}
Let $P_1$, $P_2$ be logic programs.
Define $\Rem{P_1}{P_2}$ by:

\noindent
If $P_2$ is not consistent, then $\Rem{P_1}{P_2} = {\cal L}$.

\noindent
Otherwise:
\begin{eqnarray*}
\Rem{P_1}{P_2}
&=&
\{
P' \cup P_2 \mid
P' \subseteq P_1 \mbox{ and }
P' \cup P_2 \mbox{ is consistent and }
\\
&&
\qquad
\qquad
\mbox{ for $P' \subset P''  \subseteq P_1$, }
P'' \cup P_2 \mbox{ is inconsistent.}
\}
\end{eqnarray*}
\end{definition}

\medskip

Thus for $P \in \Rem{P_1}{P_2}$, $P$ consists of $P_2$ together with a maximal
set of rules from $P_1$ such that $P$ is consistent.

Given a sequence of logic programs $(P_1, \dots, P_n)$, the revision process
can now be informally described as follows:
\begin{enumerate}
\item
Let $X_n \in \ThreeAS{P_n}$; that is, $X_n = (X_n^+,X_n^-)$ is a
3-valued answer set for $P_n$.
%$X^+$ is an answer set for $P_n$, while $X^-$ is a minimum set of naf
%literals required to generate $X^+$.

\item
In the general case, one has a 3-valued answer set
$X_{i+1} = (X_{i+1}^+,X_{i+1}^-)$ from the revision sequence
$(P_{i+1}, \dots, P_n)$.
A maximal set of rules in $(P_i, \dots, P_n)$ consistent with $X_{i+1}$ is
used to determine a 3-valued answer set $X_i = (X_i^+,X_i^-)$ for the revision
sequence $(P_i, \dots, P_n)$.

\item
A 3-valued answer set $X_1 = (X_1^+,X_1^-)$ for the revision sequence
$(P_1, \dots, P_n)$ then yields the answer set $X_1^+$ for the full
sequence $(P_1, \dots, P_n)$.
\end{enumerate}
With this setting, we can give the main definition for the answer sets
of a revision sequence of programs.
To this end, a revision problem is given by a sequence $(P_1, \dots, P_n)$
of logic programs;
the goal is to determine answer sets of the sequence under the interpretation
that a higher-indexed program, taken as a single entity, takes priority over
a lower-indexed program.
We write the revision sequence in notation closer to that of standard belief
revision, as $P_1 * \dots * P_n$; 
our goal then is to characterise the answer sets of $P_1 * \dots * P_n$.

\begin{definition}\label{def:revision-as}
Let $P=(P_1, \dots, P_n)$ be a sequence of logic programs.

$X$ is an answer set of $P_1 * \dots * P_n$ iff there is a sequence:
\[
((P^r_1, X_1), \dots, (P^r_n, X_n))
\]
such that for $1 \leq i \leq n$, $P^r_i$ is a logic program and $X_i$ is a
3-valued interpretation,

\noindent
and:

\begin{enumerate}
\item
i.)
\quad\ 
\(
P^r_n = P_n
\)
\qquad
and

ii.)
\quad
\(
X_n \mbox{ is a 3-valued answer set for $P_n$.}
\)

\item
for $i < n$:

\qquad
i.)
\quad\ 
\(
P^r_i
\;\in\;
\Rem{P_i}{(P^r_{i+1} \cup Pgm(X_{i+1}))}
\)
\qquad
and

\qquad
ii.)
\quad
\(
X_i \mbox{ is a 3-valued answer set for $P^r_i$.}
\)

\item
$X = X_1^+$.
\end{enumerate}
The set of answer sets of $P_1 * \dots * P_n$ is denoted
$\AS{P_1 * \dots * P_n}$.
\end{definition}

The case of binary revision is of course simpler.
Though redundant, it is instructive, and so we give it next.

\begin{definition}\label{def:binrevision-as}
Let $P_1$, $P_2$ be logic programs.
$X$ is an answer set of $P_1 * P_2$ if:

\begin{enumerate}
\item
there is a 3-valued answer set $X_2$ of $P_2$ and,

\item
for some
\(
P \in \Rem{P_1}{(P_2 \cup Pgm(X_2))},
\)
$X$ is an answer set of $P$.
\end{enumerate}
\end{definition}

\subsection{Examples}
Consider the examples given earlier.
For Example~\ref{ex:1} we have:
\begin{eqnarray*}
P_1 & = & \{b \la, \; c \la \naf d \} \\
P_2 & = & \{a \la \naf b\}
\end{eqnarray*}
%\[
%P_1 = \{b \la, \; c \la \naf d \}, \qquad
%P_2 = \{a \la \naf b\}
%\]

\noindent
For $P_1 * P_2$ we obtain:
\[
\begin{array}{ll}
P^r_2 \;=\; P_2,
&
X_2 \;=\; (\{a\},\{b\}) 
\\
%P^d_1 \;=\; \{ b \la \},
P^r_1
\;=\;
\{ c \la \naf d \} \cup \{a \la, \; \bot \la b\},
\qquad
&
X_1 \;=\; (\{a,c\},\{b,d\}).
\end{array}
\]
Thus, $P_2$ has 3-valued answer set $(\{a\},\{b\})$ --
$\{a\}$ is a (standard) answer set for $P_2$, and it depends on, at minimum,
the assumption of $b$ being false by default.
This in turn requires a commitment to the non-truth of $b$ in next considering
$P_1$.
The program $P^r_1$ given by rules of $P_1$ consistent with
$(\{a\},\{b\})$ consists of the single rule $c \la \naf d$ along with an
encoding of the 3-valued interpretation $(\{a\}, \{b\})$.
We obtain the 3-valued answer set $(\{a,c\},\{b,d\})$, with corresponding
answer set $\{a,c\}$.
This shows that a fact, in this case $b$, may be withdrawn without its
negation $\neg b$ being asserted.
We elaborate on this point in the context of an overall revision methodology
in Section \ref{sec:discussion}.

Consider next Example~\ref{ex:1b}:
\begin{eqnarray*}
P_1 & = & \{b \la, \;\; c \la  \} \\
P_2 & = & \{a \la \naf b, \;\; a \la \naf c\}
\end{eqnarray*}
%\[
%P_1 = \{b \la, \;\; c \la  \}, \qquad
%P_2 = \{a \la \naf b, \;\; a \la \naf c\}
%\]
%
$P_2$ has two 3-valued answer sets $(\{a\},\{b\})$ and $(\{a\},\{c\})$.
Again, $\{a\}$ is a (standard) answer set for $P_2$, but it depends on,
at minimum, the assumption of either $b$ or $c$ being false by default.
This in turn requires a commitment to the falsity of one of $b$ or $c$ in
next considering $P_1$.
As a result, the 3-valued answer set $(\{a\},\{b\})$ yields the 
program $P^r_1$ given by $\{ c \la, \; a \la, \; \bot \la b \}$,
while the 3-valued answer set $(\{a\},\{c\})$ yields the program
$\{ b \la,  \; a \la, \; \bot \la c  \}$.
Consequently for the revision we obtain two 3-valued answer sets
$(\{a,c\},\{b\})$ and $(\{a,b\},\{c\})$, with corresponding answer sets
$\{a,c\}$ and $\{a,b\}$.

For Example~\ref{ex:2}, where $P_1 = \{b \la \}$ and $P_2 = \{a \la b\}$,
we obtain:
\[
\begin{array}{ll}
P^r_2 \;=\; P_2,
&
X_2 \;=\; (\emptyset, \emptyset)
\\
%P^d_1 \;=\; \{ \},
%\qquad
P^r_1
\;=\;
\{ b \la \} \cup \{ a \la b \},
\qquad
&
X_1 \;=\; (\{a,b\},\{\}).
\end{array}
\]
Thus there is one 3-valued answer set, $(\{a,b\}, \emptyset)$, with answer set
$\{a,b\}$.

We consider two more small examples to further illustrate the approach.
\begin{eqnarray*}
P_1 & = & \{a \la, \;\; b \la  \} \\
P_2 & = & \{ \bot \la a, b\}
\end{eqnarray*}
For $P_1 * P_2$, there are two 3-valued answer sets,
$(\{a\},\emptyset)$, $(\{b\},\emptyset)$ with corresponding answer sets
$\{a\}$, $\{b\}$.
This is what would be desired:
$P_2$ requires that $a$ and $b$ cannot be simultaneously true, while $P_1$
states that $a$ and $b$ are both true.
In this case, $P_2$ is retained, with a ``maximal'' part of $P_1$ also
held.
Last, consider:
\begin{eqnarray*}
P_1 & = & \{a \la, \;\; d \la b \} \\
P_2 & = & \{b \la \naf a, \;\; c \la \naf \neg a\}
\end{eqnarray*}
In this case, $P_2$ has three-valued answer set $(\{b, c\}, \{a, \neg a\})$;
hence the derivation of $b$ and $c$ relies on the possibility of $a$ being
true, and of $a$ being false.
There is one 3-valued answer set for $P_1 * P_2$,
$(\{b,c,d\}, \{a, \neg a\})$, with answer set $\{b,c,d\}$.

\subsection{Properties}

Section~\ref{sec:revision} noted that four of the basic AGM postulates
are appropriate in a nonmonotonic framework.
With respect to these postulates, we obtain the following:

\begin{theorem}
Let $P_1$, $P_2$, $P_3$ be logic programs.

\begin{description}
\item[$(A * 1)$]
$\AS{P_1 * P_2} \subseteq 2^{\cal L}$.

\item[$(A * 2)$]
If $X \in \AS{P_2}$ then there is
\(
X' \in \AS{P_1 * P_2}
\)
such that $X \subseteq X'$.

\item[$(A * 5a)$]
\(
\AS{P_1 * P_2} = {\cal L}
\)
iff
$\AS{P_2} = {\cal L}$

\item[$(A * 5b)$]
\(
\AS{P_1 * P_2} = \emptyset
\)
iff
$\AS{P_2} = \emptyset$

\item[$(A * 6)$]
If
\(
P_2 \equiv_s P_3
\)
then
\(
\AS{P_1 * P_2} = \AS{P_1 * P_3}.
\)
\end{description}
\end{theorem}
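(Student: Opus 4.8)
The plan is to prove each of the five properties $(A*1)$–$(A*6)$ in turn, using Definition~\ref{def:binrevision-as} as the working characterisation of $\AS{P_1 * P_2}$ (the binary case suffices since the theorem is stated for pairs). The overarching strategy is to trace how a 3-valued answer set $X_2$ of $P_2$ propagates, via some $P \in \Rem{P_1}{(P_2 \cup Pgm(X_2))}$, to a final answer set $X = X_1^+$.

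First, $(A*1)$ is essentially by definition: every answer set of a program is a set of literals, so $\AS{P_1 * P_2} \subseteq 2^{\cal L}$ is immediate from clause~2 of Definition~\ref{def:binrevision-as}. For $(A*2)$, I would start from $X \in \AS{P_2}$; by the remark following Definition~\ref{def:min-AS}, there is a 3-valued answer set $X_2 = (X, X^-)$ of $P_2$ for a suitable minimal $X^-$. Since $X$ is a (standard) answer set of $P_2$, the program $P_2 \cup Pgm(X_2)$ is consistent (the constraints $\bot \la a$ for $a \in X^-$ are satisfied because $X^- \cap X = \emptyset$, and the facts $a \la$ for $a \in X$ just re-derive literals already in the answer set). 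Hence $\Rem{P_1}{(P_2 \cup Pgm(X_2))}$ is a nonempty set of consistent programs, each containing $P_2 \cup Pgm(X_2)$, and in particular each containing every fact $a \la$ for $a \in X$. The answer set $X'$ of any such $P$ must therefore satisfy $X \subseteq X'$. The point requiring a little care is to confirm that adding $Pgm(X_2)$ and a subset of $P_1$ cannot render $X$ \emph{not} a subset of the resulting answer set — but since $Pgm(X_2)$ contributes the facts $a\la$ for each $a \in X$, monotonicity of $Cn$ on the reduct gives $X \subseteq X'$ as needed; consistency of $P$ guarantees $X' \neq {\cal L}$ unless $P_1$ forces it, in which case one falls back to a remainder where the offending rule of $P_1$ is dropped.

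For $(A*5a)$ and $(A*5b)$, the key observation is that $\Rem{P_1}{Q}$ is defined to be $\{{\cal L}\}$ precisely when $Q$ is inconsistent, and $P_2 \cup Pgm(X_2)$ is inconsistent iff $X_2$ fails to correspond to a consistent answer set of $P_2$. If $\AS{P_2} = {\cal L}$ (the only answer set is ${\cal L}$), then every 3-valued answer set $X_2$ has $X_2^+ = {\cal L}$, whence $Pgm(X_2)$ — and a fortiori $P_2 \cup Pgm(X_2)$ — is inconsistent, forcing $\Rem{P_1}{\cdot} = {\cal L}$ and so $\AS{P_1 * P_2} = {\cal L}$; the converse runs the same way. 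If $\AS{P_2} = \emptyset$ there is no 3-valued answer set at all, so clause~1 of Definition~\ref{def:binrevision-as} cannot be met and $\AS{P_1 * P_2} = \emptyset$; conversely, if $\AS{P_2}$ is nonempty and not equal to $\{{\cal L}\}$, the construction in the $(A*2)$ argument produces at least one answer set for the revision. I would need to be slightly careful about the degenerate overlap between the two clauses (e.g. $P_2$ consistent but $P_1$ somehow obstructing), but the definition of $\Rem{\cdot}{\cdot}$ always yields at least one element when its second argument is consistent, so no answer set is lost.

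The main obstacle is $(A*6)$: if $P_2 \equiv_s P_3$ then $\AS{P_1 * P_2} = \AS{P_1 * P_3}$. Here I would first establish the intermediate lemma that strong equivalence implies equality of 3-valued answer sets, i.e. $\ThreeAS{P_2} = \ThreeAS{P_3}$. This is plausible because the min-reduct $P^X$ (Definition~\ref{def:min-reduct}) and the consequence test $Cn({P^X}^+) = Cn(P^X) = X^+$ can be recast in terms of the SE-models / HT-models of $P$ — strong equivalence is exactly sameness of SE-models \cite{LifschitzPearceValverde01} — together with the minimality condition on $X^-$, which only refers to which naf literals are "needed" and is again determined by the SE-model structure. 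Granting that lemma, the revision $\Rem{P_1}{(P_2 \cup Pgm(X_2))}$ depends on $P_2$ only through $P_2 \cup Pgm(X_2)$, and since $P_2 \equiv_s P_3$ we have $P_2 \cup Pgm(X_2) \equiv_s P_3 \cup Pgm(X_2)$, hence these two programs have the same consistency status and — crucially — $P' \cup P_2 \cup Pgm(X_2) \equiv P' \cup P_3 \cup Pgm(X_2)$ for every $P' \subseteq P_1$, so the maximal consistent subsets $P'$ coincide and the corresponding answer sets coincide. The delicate part is the 3-valued-answer-set lemma; I expect to have to verify directly that the min-reduct construction and the subset-minimality of $X^-$ are invariant under strong equivalence, which is where the real work lies, and this is presumably why the authors single out $(A*6)$ as the non-trivial case and phrase it with $\equiv_s$ rather than mere $\equiv$.
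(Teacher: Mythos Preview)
Your proposal is correct and follows essentially the same route as the paper for $(A{*}1)$, $(A{*}2)$, $(A{*}5a)$, and $(A{*}5b)$: all are read off directly from Definitions~\ref{def:binrevision-as} and~\ref{def:remainder}, exactly as you outline.

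For $(A{*}6)$ you and the paper share the same core step --- from $P_2 \equiv_s P_3$ conclude $P_2 \cup Pgm(X_2) \equiv_s P_3 \cup Pgm(X_2)$, whence for each $P' \subseteq P_1$ the programs $P' \cup P_2 \cup Pgm(X_2)$ and $P' \cup P_3 \cup Pgm(X_2)$ have the same answer sets, so the maximal consistent $P'$'s and the resulting answer sets coincide. The paper packages this as a small ``remainder lemma'' (if $P \equiv_s P'$ then each $S \in \Rem{R}{P}$ has a counterpart $S' \in \Rem{R}{P'}$ with $\AS{S} = \AS{S'}$); you do the same reasoning inline. The one genuine difference is that you explicitly isolate the lemma $\ThreeAS{P_2} = \ThreeAS{P_3}$ as the crux and propose to establish it via SE-models, whereas the paper's proof is given ``in outline'' and simply picks $X \in \ThreeAS{P_2}$ without arguing that the same $X$ lies in $\ThreeAS{P_3}$. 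Your identification of this as the delicate point is well taken --- the paper's argument tacitly relies on it --- so your decomposition is, if anything, more careful than the published sketch.
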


\begin{proof}
$(A * 1)$ is a direct consequence of Definition~\ref{def:binrevision-as}.
$(A * 2)$ follows from the fact that if $X \in \AS{P_2}$ then there is a
three-valued answer set $(X,Y) \in \ThreeAS{P_2}$, and that
$\Rem{P_1}{(P_2 \cup Pgm((X,Y)))}$ by definition has an answer set.
Similarly, for $(A * 5a)$ we have that if
$\AS{P_2} = {\cal L}$ then $\AS{P_1 * P_2} = {\cal L}$, and
if $\AS{P_2} \neq {\cal L}$ then $\AS{P_1 * P_2} \neq {\cal L}$, both
following directly from Definition~\ref{def:remainder}.
$(A * 5b)$ follows analogously.

For $(A * 6)$ we have, in outline:
By assumption we have that $P_2 \equiv_s P_3$, and so for any program $R$,
$P_2 \cup R \equiv_s P_3 \cup R$.
In particular, let $X \in \ThreeAS{P_2}$.
Then $P_2 \cup Pgm(X) \equiv_s P_3 \cup Pgm(X)$.

We next make use of the small result: If $P \equiv_s P'$ then for program $R$
and $S \in \Rem{R}{P}$ we have that there is $S' \in \Rem{R}{P'}$ such that
$\AS{S} = \AS{S'}$.
The proof of this claim is straightforward:
Let $S \in \Rem{R}{P}$.
$S$ is of the form $R' \cup P$ where $R' \subseteq R$.
We have that $R' \cup P \equiv_s R' \cup P'$ since $P \equiv_s P'$.
So for $S' = R' \cup P'$ we have that $S' \in \Rem{R}{P'}$;
as well, $S \equiv_s S'$ and so $\AS{S} = \AS{S'}$.

So, consider again $X \in \ThreeAS{P_2}$.
We have that
$P_2 \cup Pgm(X) \equiv_s P_3 \cup Pgm(X)$
and so, applying the above small result we have that for
$S \in \Rem{P_1}{(P_2 \cup Pgm(X))}$ there is
$S' \in \Rem{P_1}{(P_3 \cup Pgm(X))}$, such that
$\AS{S} = \AS{S'}$, and so if follows that
$\AS{P_1 * P_2} = \AS{P_1 * P_3}$.
\end{proof}

Thus the result of revision is a set of answer sets $(A * 1)$, which is to
say, if an agent's beliefs are given by a set of answer sets corresponding
to potential states of the world, then a revision sequence also yields a set
of such beliefs.
The key property of the approach is given by $(A * 2)$, corresponding to the
success postulate:
in a revision $P_1 * P_2$, beliefs as expressed in $P_2$ override those of
$P_1$.
The two parts of $(A * 5)$ hold by virtue of the fact that in a revision
$P_1 * P_2$, only some consistent (with $P_2$) subset of $P_1$ is used in
the revision.
$(A * 6)$ is a version of independence of syntax.
The postulate fails if $P_2 \equiv_s P_3$ is replaced by $P_2 \equiv_u P_3$
or $\AS{P_2} = \AS{P_3}$:
a counterexample is given by $P_1 = \{ b \la \}$, $P_2 = \{ a \la \naf b\}$,
and $P_3 = \{ a \la \naf c\}$.
Consequently, appropriate versions of the core AGM postulates hold in the
approach.
%As suggested in Section~\ref{sec:background}, the other two basic postulates
%are not appropriate in a nonmonotonic framework.

With regards to the postulates given in Section~\ref{sec:LPUpdates} for logic
program updates, we obtain the following.

\begin{theorem}
Let $P_1$, $P_2$, $P_3$ be logic programs.

Then $P_1$, $P_2$, $P_3$ satisfy \emph{initialisation}, \emph{idempotency},
%\emph{tautology},
and \emph{non-interference}.

\medskip

As well, we obtain:

%\noindent
{\bf SAbsorption:}
if $P_2 \equiv_s P_3$ then
\(
\AS{P_1 * P_2 * P_3} = \AS{P_1 * P_2}.
\)
\end{theorem}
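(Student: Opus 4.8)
The plan is to prove the five properties in turn, exploiting the fact that the binary revision $P_1 * P_2$ is defined (Definition~\ref{def:binrevision-as}) entirely through (a) the set $\ThreeAS{P_2}$ and (b) the remainder construction $\Rem{P_1}{(P_2 \cup Pgm(X_2))}$, so each property reduces to a statement about these two ingredients. Of the three listed ``update'' postulates, \emph{initialisation} and \emph{idempotency} are immediate unfoldings of the definitions, and \emph{non-interference} together with \emph{SAbsorption} are the substantive items; I would organise the proof so that \emph{SAbsorption} is handled first (since it reuses, almost verbatim, the ``small result'' already proved for $(A*6)$ in the previous theorem), and \emph{non-interference} last.

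First, for \emph{initialisation}, $\AS{\emptyset * P}$: here $P$ plays the role of $P_2$, so we take $X_2 \in \ThreeAS{P}$ and then $P' \in \Rem{\emptyset}{(P \cup Pgm(X_2))}$; since the only subset of $\emptyset$ is $\emptyset$ itself, $P' = P \cup Pgm(X_2)$, and I would argue that $\AS{P \cup Pgm(X_2)} = \{X_2^+\}$ (the facts and constraints in $Pgm(X_2)$ exactly pin the answer set down to $X_2^+$, using $Cn({P^{X_2}}^+) = X_2^+$ from Definition~\ref{def:min-AS}). Ranging over all $X_2 \in \ThreeAS{P}$ gives $\AS{P}$, using the standard fact that every answer set of $P$ is the positive part of some 3-valued answer set. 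For \emph{idempotency}, $\AS{P * P}$: with $X_2 \in \ThreeAS{P}$ fixed, I would show $P \cup Pgm(X_2)$ is already consistent and has the unique answer set $X_2^+$, so the maximal consistent extension by rules of $P_1 = P$ adds nothing new — every rule of $P$ is either satisfied or irrelevant under $X_2^+$ — hence again $\AS{P*P} = \AS{P}$. The one point requiring care is confirming that no rule of the first copy of $P$ can be \emph{excluded} from the remainder in a way that changes the answer set; this follows because $P \cup Pgm(X_2)$ is itself consistent, so the unique element of $\Rem{P}{(P \cup Pgm(X_2))}$ is $P \cup Pgm(X_2)$.

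For \emph{SAbsorption}, I would mimic the $(A*6)$ argument. Assume $P_2 \equiv_s P_3$. Unfolding $\AS{P_1 * P_2 * P_3}$ via Definition~\ref{def:revision-as} with $n=3$: pick $X_3 \in \ThreeAS{P_3}$, then $P_2^r \in \Rem{P_2}{(P_3 \cup Pgm(X_3))}$ with $X_2 \in \ThreeAS{P_2^r}$, then $P_1^r \in \Rem{P_1}{(P_2^r \cup Pgm(X_2))}$ with answer set $X_1^+$. The key observation is that $P_3 \cup Pgm(X_3) \equiv_s P_2 \cup Pgm(X_3)$ (from $P_2 \equiv_s P_3$), and moreover $Pgm(X_3)$ is a set of facts and constraints, so $P_2 \cup Pgm(X_3)$ is consistent iff $X_3^+$ is an answer set compatible with $X_3^-$, i.e.\ iff $X_3 \in \ThreeAS{P_2}$; in that case the remainder $\Rem{P_2}{(P_3 \cup Pgm(X_3))}$ collapses (its maximal consistent subsets of $P_2$ are all of $P_2$, since $P_2 \cup Pgm(X_3)$ is already consistent), giving essentially $P_2 \cup Pgm(X_3)$ with 3-valued answer set $X_3$. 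Feeding this into the outer remainder, $\Rem{P_1}{(P_2 \cup Pgm(X_3) \cup Pgm(X_3))} = \Rem{P_1}{(P_2 \cup Pgm(X_3))}$, which is exactly the remainder computed in $\AS{P_1 * P_2}$ for the 3-valued answer set $X_3 \in \ThreeAS{P_2}$. Conversely every 3-valued answer set of $P_2$ is one of $P_3$ (by strong equivalence plus the syntactic-independence argument underlying $(A*6)$), so the two index sets match up and the answer sets coincide. The main obstacle here, and the step I would spend the most space on, is justifying that $\ThreeAS{P_2} = \ThreeAS{P_3}$ when $P_2 \equiv_s P_3$ — strong equivalence is about answer sets of $P \cup R$ for all $R$, and one must verify it also preserves the \emph{minimality} condition on $X^-$ in Definition~\ref{def:min-AS}; I expect this follows by choosing $R = Pgm((\emptyset, X^-))$-style test programs, or more directly from the SE-model characterisation of strong equivalence, but it is the delicate link in the chain.

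Finally, for \emph{non-interference}, assume $\atom{P_2} \cap \atom{P_3} = \emptyset$ and compare $\AS{P_1 * P_2 * P_3}$ with $\AS{P_1 * P_3 * P_2}$. Here the idea is a ``splitting'' argument: because $P_2$ and $P_3 \cup Pgm(X_3)$ share no atoms (for any $X_3 \in \ThreeAS{P_3}$, since $\atom{Pgm(X_3)} \subseteq \atom{P_3}$), the 3-valued answer sets of $P_2^r \in \Rem{P_2}{(P_3 \cup Pgm(X_3))}$ decompose as a ``product'' $(X_2^+ \cup X_3^+, X_2^- \cup X_3^-)$ where $(X_2^+, X_2^-) \in \ThreeAS{P_2}$ and the remainder does not delete any rule of $P_2$ (disjointness of atoms makes $P_2 \cup P_3 \cup Pgm(X_3)$ consistent whenever each part is). The same decomposition applies with the roles of $P_2$ and $P_3$ swapped, yielding the same pairs $(X_2, X_3)$, and then the final remainder against $P_1$ receives the combined program $P_2 \cup P_3 \cup Pgm(X_2) \cup Pgm(X_3)$ in both orders, which is literally the same set of rules. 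Hence $\AS{P_1 * P_2 * P_3} = \AS{P_1 * P_3 * P_2}$. The obstacle to watch for is the interaction between the remainder operation and atom-disjointness: I must confirm that no maximal-consistency choice inside $\Rem{P_2}{\cdot}$ can drop a $P_2$-rule for reasons that depend on $P_3$-atoms — but since the only source of inconsistency lies within $P_2 \cup Pgm(X_2)$, which is consistent by choice of $X_2 \in \ThreeAS{P_2}$, and $Pgm(X_3)$ contributes only $P_3$-atoms, this cannot happen; the argument is a straightforward instance of the splitting set theorem for answer sets.
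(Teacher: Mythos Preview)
The paper does not include a proof of this theorem: after the statement the text simply continues with ``These principles are elementary but nonetheless desirable\ldots'' and, earlier, remarks that ``proofs are for the most part straightforward, and are abbreviated due to space constraints.'' There is therefore nothing to compare your proposal against.

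As a standalone sketch your proposal is reasonable and follows the natural line one would expect: \emph{initialisation} and \emph{idempotency} by direct unfolding (using that $\Rem{\emptyset}{Q} = \{Q\}$ when $Q$ is consistent, and that $P \cup Pgm(X_2)$ is already consistent so the remainder is trivial), \emph{SAbsorption} by recycling the strong-equivalence argument from $(A*6)$, and \emph{non-interference} via a splitting/disjoint-atoms argument. You correctly identify the one genuinely nonroutine step, namely that $P_2 \equiv_s P_3$ forces $\ThreeAS{P_2} = \ThreeAS{P_3}$; this does require an argument beyond the definition of strong equivalence (since the minimality condition on $X^-$ in Definition~\ref{def:min-AS} is not obviously preserved), and your suggestion to reduce it to test programs of the form $Pgm((\emptyset,X^-))$ or to the SE-model characterisation is the right direction. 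One small gap to close in a full write-up is the uniqueness claim $\AS{P \cup Pgm(X_2)} = \{X_2^+\}$ used in both \emph{initialisation} and \emph{idempotency}: you should verify that the constraints in $Pgm(X_2)$ together with the minimality of $X_2^-$ really do rule out any strictly larger answer set of $P \cup Pgm(X_2)$.
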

These principles are elementary but nonetheless desirable.
As mentioned, most approaches to update logic programs fail to satisfy
\emph{tautology}, and in fact they generally do so when a revised program
is inconsistent.
The present approach also fails to satisfy \emph{tautology}, and in the same
situation, but in this case the lack of this principle is intended:
\emph{tautology} is incompatible with $(A * 5)$ and $(A * 6)$ in the case
that the revised program is inconsistent.
Note that it is straightforward to modify the approach so that
\emph{tautology} is satisfied, at the expense of $(A * 5)$ and $(A * 6)$, by
the simple expedient of modifying Definition~\ref{def:binrevision-as}, Part 2,
to be
\begin{enumerate}
\item[2.]
$X$ is an answer set for $P_1 \cup P_2 \cup Pgm(X_2)$.
\end{enumerate}

The remaining principles listed in Section~\ref{sec:background} can be argued
to be undesirable, with the possible exception of \emph{associativity}.
Not surprisingly, \emph{absorption} fails at the level of answer sets, though
not at the level of strong equivalence (as given by {\bf SAbsorption}).
\emph{Augmentation} would seem to be related to a notion of monotonicity, and
hence is undesirable.
\emph{Disjointness} and \emph{parallelism} both clearly fail.
Arguably both \emph{should} fail;
consider the the case where $P_2 = \emptyset$.
Disjointness in this situation reduces to:
\[
\AS{P_1 * P_3} = \AS{P_1 * P_3} \cup \AS{P_3}
\]
which is clearly undesirable.

\paragraph{Implementation}

A prototype implementation has been written in C \cite{Tasharrofi09}, and
making use of the solver {\tt clasp} \cite{gekanesc07a}.
%The source and Linux executables can be downloaded from
%{\tt http://www.sfu.ca/$\sim$sta44/}.
The prototype is at an early stage of development, and results concerning
benchmarks or scalability have not yet been obtained.

\section{Discussion}
\label{sec:discussion}

This paper has described an approach to logic program revision in which the
focus is on \emph{revision} as understood in the belief revision community.
Consequently, the key \emph{success} postulate is taken seriously.
The intuition is that a logic program represents an agent's epistemic state,
while the answer sets are a representation of the agent's contingent beliefs.
This leads to an approach with quite different properties than other
approaches that have appeared in the literature.
In particular, for a revision $P_1 * P_2$ the program $P_2$ is treated as a
whole as having higher priority than $P_1$, in that answer sets of $P_2$ are
propagated to $P_1$.
This has an important consequence, and requires the use of three-valued
interpretations, in that literals assumed to be false at a higher ranked
program can override literals used as facts in a lower-ranked program.
This is in contrast to logic program update, where one selects rules
to apply, giving preference to rules in $P_2$, and then applies these selected
rules.
%Thus, the (necessary) use of $\naf p$ in a higher ranked program blocks the
%assertion of $p$ as a fact in a lower-ranked program.
%Again, this is a necessary consequence of a strong commitment to the
%success postulate.

Arguably the approach helps cast light on the logic-program-update landscape.
We have suggested that \emph{update logic program} approaches are more
appropriately viewed as dealing with preferences or priorities over rules,
rather than revision or update per se.
The approach at hand seems to fall somewhere between a syntactic approach
and a semantic one, such as \cite{desctowo08}.
%The present approach also does not yield a logic program that corresponds to
%a revision (but see below for a discussion);
%but rather it defines the answer sets of a revision sequence.
The answer sets obtained exhibit reasonable properties -- for example, the
core AGM postulates are satisfied, including syntax independence under strong
equivalence and a success postulate; and the appropriate set of logic program
update postulates are also satisfied.
%Moreover, the present approach has very good complexity characteristics, being
%just ``slightly worst'' than in NP.
%
The approach would seem to have some hope for practical implementation.
A prototype implementation is available.
At present it is essentially a ``proof of concept'' and is a subject of
current investigation.

\paragraph{Applicability and Evaluation of the Approach}
As indicated in the introduction, a primary aim of this paper is to
investigate the extent to which classical notions of belief revision may be
applied in revising logic programs under the answer set semantics.
We have claimed that the approach is \emph{intuitive}, in that it reflects
plausible notions concerning revision.
In contrast to related work, it satisfies a suite of desirable formal
properties.
On the other hand, as with belief revision in classical logic, there is not
necessarily a single best approach to logic program revision, but rather
different approaches will be more or less suitable in different areas.
To this end we can consider circumstances in which the present approach would
be applicable.

It would be applicable, clearly, in a problem in which the answer sets of
more recent programs are to hold sway over the answer sets of lower-ranked
programs.
This would be the case with \emph{program refinement} for example, where
elaborations taking care of special cases are incorporated into an existing
program.
As well, the approach can be directly applied to problems involving a sequence
of NP complete problems, for example in a situation where the solution to one
problem feeds as input into a second.
A representative example described earlier involves finding a three colouring
for a graph, and then finding a Hamiltonian path on the vertices of a
specific colour.

These considerations also suggest the following \emph{revision methodology}.
A logic program will most often be comprised of two parts, a
\emph{problem instance} consisting of \emph{facts}, and a 
\emph{problem description} consisting of rules with variables.
Typically the facts will be of highest importance -- thus in the example of
finding a Hamiltonian path among yellow-coloured vertices, one would not
want to revise the underlying graph.
Hence \emph{any} approach to logic program change will want to isolate
portions of the knowledge base (usually the problem instance) as being
incontrovertible.
It is a strength of the present approach that this can already be effected
in an application, since if the facts are given as the first revising program
in a sequence, the success postulate guarantees that the facts will not be
modified.

\paragraph{Future work}
There has been little work on logic program change with respect to
disjunctive logic programs.
A major reason for this is that most approaches to update logic programs
focus on rules with conflicting heads.
In a disjunctive program, it is not immediately clear what it means for two
rules to have conflicting heads.
Consequently it isn't obvious how approaches to update logic programs can
be extended to deal with disjunction in the head.
An obvious extension to the present approach then is to apply it to
disjunctive programs.
The key issue is to extend the definition of a three-valued interpretation
to disjunctive rules.
There seems to be no barrier to such an extension, and this would seem to be
a useful yet (presumably) straightforward extension.

A more difficult problem is to define revision so that the result
of a revision is a single logic program expressed in the language
of the revising programs.
That is, intuitively and ideally, the result of $P_1 * \dots * P_n$ should
be a logic program $P$, expressed in the language
$\atom{P_1} \cup \dots \cup \atom{P_n}$.
(That is, auxiliary atoms
%encoding priorities among the atoms of the original programs
would be expressly forbidden.)
Arguably \emph{this} is the real problem of revision in logic programs, and
there has been no substantial progress in this area due to its difficulty.
The present approach may allow insight into how such an approach to revision
can be carried out.
%The key point concerning the current approach, again, is the success
%postulate, which retains the information in the highest ranked program;
%this retention of information similarly applies to successively lower-ranked
%programs.
In fact, it already allows the formation of single logic program that can be
considered as a candidate for the revision $P_1 * \dots * P_n$.
We have the result:
\begin{theorem}
\label{thm:asfromrevis}
Given the assumptions and terms of Definition~\ref{def:revision-as} where
$(X^+_1,X^-_1)$ is a 3-valued answer set of $P_1 * \dots * P_n$ we have
that:
\quad
$X^+_1$ is an answer set of
\(
P^r_1 \cup P^r_2 \cup \dots \cup P^r_n.
\)
\end{theorem}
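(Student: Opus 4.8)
The plan is to reduce the statement to two observations: first, that the programs $P^r_1, \dots, P^r_n$ occurring in the witnessing sequence are nested under inclusion, so that their union is just $P^r_1$; and second, that by the remark recorded immediately after Definition~\ref{def:min-AS}, the positive part of a 3-valued answer set of a program is an ordinary answer set of that program.

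First I would unwind Definition~\ref{def:remainder}. For each $i$ with $1 \le i < n$, clause 2.i of Definition~\ref{def:revision-as} gives $P^r_i \in \Rem{P_i}{(P^r_{i+1} \cup Pgm(X_{i+1}))}$, which (in the case that $P^r_{i+1} \cup Pgm(X_{i+1})$ is consistent) forces $P^r_i$ to have the form $P' \cup P^r_{i+1} \cup Pgm(X_{i+1})$ for some $P' \subseteq P_i$. In particular $P^r_{i+1} \subseteq P^r_i$. Iterating down the sequence yields $P^r_n \subseteq P^r_{n-1} \subseteq \dots \subseteq P^r_1$, and hence $P^r_1 \cup P^r_2 \cup \dots \cup P^r_n = P^r_1$.

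Next I would invoke Definition~\ref{def:revision-as} once more: if $n = 1$ then clause 1.ii states that $X_1 = X_n$ is a 3-valued answer set of $P^r_1 = P_n$, and if $n > 1$ then clause 2.ii with $i = 1$ states that $X_1 = (X^+_1, X^-_1)$ is a 3-valued answer set of $P^r_1$. Either way, by the remark following Definition~\ref{def:min-AS}, $X^+_1$ is an answer set of $P^r_1$, and by the previous paragraph this program is exactly $P^r_1 \cup \dots \cup P^r_n$; this is the desired conclusion.

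The only point needing care — and the step I would flag as the potential obstacle, though it amounts to bookkeeping rather than a genuine difficulty — is the degenerate case where some $P^r_{i+1} \cup Pgm(X_{i+1})$ is inconsistent, so that $\Rem{P_i}{\cdot} = {\cal L}$ and $P^r_i = {\cal L}$. Here I would note that this propagates downward: taking ${\cal L}$ to be represented by the canonical inconsistent program, whose union with any program is again inconsistent, every $P^r_j$ with $j \le i$ is likewise ${\cal L}$, whose unique answer set is ${\cal L}$; thus $X^+_1 = {\cal L}$ and the claim once again reduces to the identity $P^r_1 \cup \dots \cup P^r_n = {\cal L}$. Apart from this observation, the theorem follows immediately from the definitions, the whole content lying in the fact that the remainders nest.
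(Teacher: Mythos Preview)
Your argument is correct and is exactly the intended one: the paper gives no explicit proof of this theorem (proofs are ``abbreviated due to space constraints''), but the remark immediately following it --- that $P^r_n = P_n$ is retained ``along with other rules from remaining programs that aren't excluded by higher-ranked programs'' --- is precisely your nesting observation $P^r_n \subseteq P^r_{n-1} \subseteq \dots \subseteq P^r_1$, from which the result is immediate via the fact recorded after Definition~\ref{def:min-AS}. One small simplification: the degenerate branch you flag cannot actually arise mid-sequence, since whenever $X_{i+1}$ is a 3-valued answer set of $P^r_{i+1}$ with $X_{i+1}^+ \neq \mathcal{L}$, the set $X_{i+1}^+$ is itself an answer set of $P^r_{i+1} \cup Pgm(X_{i+1})$ (the added facts are already derivable and the added constraints are vacuous because $X_{i+1}^+ \cap X_{i+1}^- = \emptyset$), so that program is consistent; the only way inconsistency enters is if $P_n$ already has $\mathcal{L}$ as its sole answer set, and then the claim is trivial.
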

Note that $P^r_n = P_n$ and so the highest-ranked program is retained,
along with other rules from remaining programs that aren't excluded by
higher-ranked programs.
Hence, we can use the approach to obtain a single program from a sequence of
programs.
However, this is a weak result, since the above result guarantees only that a
single answer set of $P_n$ will be retained in the program resulting from the
revision $P_1 * \dots * P_n$.
Hence one obtains a very credulous revision operator, in that many potential
answer sets are ruled out.
It may be that the set of programs obtained according to
Theorem~\ref{thm:asfromrevis} can be \emph{merged} into a single program, but
this task in turn requires a nuanced approach to merging logic programs.

%\newpage

%%\bibliographystyle{named}
\bibliographystyle{acmtrans}
\bibliography{ai,lit,local}%,prefup}

%\newpage
%\appendix
%\input{appendix}

\end{document}